\newtheorem{theorem}{Theorem}[section]
\newtheorem{lemma}[theorem]{Lemma}
\newtheorem{proposition}[theorem]{Proposition}
\begin{document}

\title{Energy Saving Additive Neural Network \ \\}
\author{\IEEEauthorblockN{{Arman Afrasiyabi$^{1 \star}$}, {Ozan Yildiz$^{1 \star}$}, {Baris Nasir$^1$}, {Fatos T. Yarman Vural$^1$} and {A. Enis Cetin$^2$}} \\
\IEEEauthorblockA{{$1$ Computer Engineering Department, Middle East Technical University, Ankara, Turkey} \\ Email: \{afrasiyabi.arman, oyildiz, bnasir, vural\}@ceng.metu.edu.tr \\ 
$\star$ Equal contribution \\ \ \\ 
{$2$ Electrical and Electronics Engineering, Bilkent University, Ankara, Turkey} \\ 
Email: cetin@bilkent.edu.tr} 
\footnote{This is not a lie.}
}

\maketitle

\begin{abstract}
In recent years, machine learning techniques based on neural networks for mobile computing become increasingly popular. Classical multi-layer neural networks require matrix multiplications at each stage. Multiplication operation is not an energy efficient operation and consequently it drains the battery of the mobile device. In  this paper, we propose a new energy efficient neural network with the universal approximation property over space of Lebesgue integrable functions. This network, called, additive neural network, is very suitable for mobile computing. The neural structure is based on a novel vector product definition, called ef-operator, that permits a multiplier-free implementation. In ef-operation, the "product" of two real numbers is defined as the sum of their absolute values, with the sign determined by the sign of the product of the numbers. This "product" is used to construct a vector product in $R^N$. The vector product induces the $l_1$ norm. The proposed additive neural network successfully solves the XOR problem. The experiments on MNIST dataset show that the classification performances of the proposed additive neural networks are very similar to the corresponding multi-layer perceptron and convolutional neural networks (LeNet).

\section{Introduction}

\end{abstract}

    
    
    Artificial Neural Networks (ANN) have been shown to solve many real world problems, such as, computer vision, natural language processing, recommendation systems and many other fields \cite{lecun2015deep}. Convolutional Neural Network (CNN) architectures achieve human performance in many computer vision problems including image classification tasks \cite{lecun1998gradient, krizhevsky2012imagenet, simonyan2014very, taigman2014deepface, szegedy2015going}. However, the number of parameters in these high-performance networks ranges from millions to billions which require computers capable of handling high computational complexity, high energy and memory size. Consequently, the minimal computational environment for such a network is a desktop computer with a powerful CPU and a dedicated high-end GPU.

	Recent developments in VLSI industry create powerful mobile devices which can be used in many practical recognition applications. ANNs are already being used in drones and unmanned aerial vehicles for flight control, path estimation \cite{calise1998nonlinear}, obstacle avoidance and human recognition like abilities \cite{giusti2016machine} (DJI Phantom 4).

	However, the current structure of the ANNs, especially, deep networks, prohibits us to implement these algorithms effectively on mobile devices due to high energy requirements. A typical neuron needs to perform three main tasks to produce an output: (i) an inner product operation involving multiplication of inputs by weights, (ii) addition, and (iii) pass the result of the inner product through an activation function. According to the \cite{han2015learning}, the multiplication operation is the most energy consuming operation.  

    In this paper, we propose an $l1$ norm based energy efficient neural network, called \textit{additive neural network}, that replaces the multiplication operation with a new energy efficient operator, called \textit{ef-operator}. Instead of multiplications, we use sign multiplications and addition operations in a typical neuron. The sign multiplication of two real numbers is a simple bit operation. An addition consumes relatively lower energy compared to a regular multiplication as shown in \cite{han2015learning} in most processors. Our object recognition experiments on MNIST and CIFAR datasets show that we are able to match the performance of the state of the art neural networks
    without performing any other changes on the ANN structure.
    
    In Section 2, we review the related work in energy efficient neural network design. In Section 3, we define a new vector product and the corresponding operator, called ef-operator. In Section 4, we introduce the additive neural network, based on the ef-operator. In Section 5, we made a brief analysis for the existence and convergence problems of the proposed additive neural network. Section 6, provides the experimental results to compare the performance of the proposed additive neural network with multi-layer perceptron and convolutional neural networks. Finally, Section 7 concludes the paper. 

\section{Related Work}

	Due to large size of the parameter space, artificial neural networks are generally computationally prohibitive and become inefficient in terms of energy consumption and memory allocation. Several approaches from different perspectives have been proposed to design computationally efficient neural network structures to handle high computational complexity. 
   
   We first introduced the $l1$ norm based vector product for some image processing applications in 2009 \cite{tuna2009image,suhre2013multiplication,akbacsenergy,demir2016co}. We also proposed the multiplication free neural network structure in 2015 \cite{akbacs2015multiplication}. However, the recognition rate was below $10\%$of a regular neural network. In this article, we are able to match the performance of regular neural networks by introducing a scaling factor to the $l1$ norm based vector product and new training methods. We are only $0.034\%$ below the recognition rate of a regular neural network in MNIST dataset.  
    
	Other solutions to energy efficient neural networks include dedicated software for a specific hardware, i.e. neuromorphic devices \cite{esser2016convolutional,painkras2013spinnaker,pfeil2012six,moradi2014event,park201465k}. Although such approaches reduces energy consumption and memory usage, they require special hardware. Our neural network framework can be implemented in ordinary microprocessors and digital signal processors.
    
	Sarwar et al. used the error resiliency property of neural networks and proposed an approximation to multiplication operation on artificial neurons for energy-efficient neural computing \cite{sarwar2016multiplier}. They approximate the multiplication operation by using the Alphabet Set Multiplier (ASM) and Computation Sharing Multiplication (CSHM) methods. In ASM, the multiplication steps are replaced by shift and add operators which are performed by some alphabet defined by a pre-computer bank. This alphabet is basically a subset of the lower order multiplies of the input. The multiplies that are not exist in the computed subset are approximated by rounding them to nearest existing multiplies. This method reduces the energy consumption since addition and bit shifting operations are much efficient than the multiplication. Therefore, the smaller sized alphabets result in a more efficient architecture. Additionally, they define a special case called Multipler-less Artificial Neuron (MAN), in which there is only one alphabet for each layer. This method provides more energy efficiency with a minimum accuracy loss. It should be noted that this method is applied on test stages, therefore, the training step still uses the conventional method.
 
	Han et al. proposed a model that reduces both computational cost and storage by feature learning \cite{han2015learning}. Their approach consists of three steps. In the first step, they train the network to discriminate important features from redundant ones. Then, they remove the redundant weights, and occasionally neurons, according to a threshold value to obtain a sparser network. This step reduces the test step's cost. At the final step they retrain the network to fine tune the remaining weights. They state that this step is much more efficient than using the fixed network architecture. They tested the proposed network architecture with ImageNet and VGG-16. The parameter size for these networks reduces between $\times 9$ to $\times 13$ without any accuracy loss. 
    
    Abdelsalam et al. approximate the tangent activation function using the Discrete Cosine Transform Interpolation Filter (DCTIF) to run the neural networks on FPGA boards efficiently \cite{abdelsalam2016accurate}. They state that DCTIF approximation reduces the computational complexity at the activation function calculation step by performing simple arithmetic operations on stored samples of the hyperbolic tangent activation function and input set. The proposed DCTIF architecture divides the activation function into three regions, namely, pass, process and saturation regions. In the pass region the activation function is approximated by y = x and in the saturation region the activation function is taken as y = 1. The DCTIF takes place in the process region. Parameters of the transformation should be selected carefully to find a balance between computational complexity and accuracy. They have shown that the proposed method achieve significant decrease on energy consumption while keeping the accuracy difference within $1\%$ with conventional method.
    
    Rastegari et al. proposes two methods to provide efficiency on CNNs. The first method, Binary-Weight-Networks, approximates all the weight values to binary values \cite{rastegari2016xnor}. In this way the network needs less memory (nearly $\times 32$). Since the weight values are binary, convolutions can be estimated by only addition and subtraction, which eliminates the main power draining multiplication operation. Therefore, this method both provides energy efficiency and faster computations.
    
    The second method proposed by them is called XNOR-Networks where both weights and inputs to the convolutional and fully connected layers are approximated by binary values. This extends the earlier proposed method by replacing addition and subtraction operations with XNOR and bitcounting operations. This method offers $\times 58$ faster computation on CPU on average. While this method enables us to run CNNs on mobile devices, it costs $12\%$ loss accuracy on average.

\section{A New Energy Efficient Operator}
\label{sec:ANEEO}
	Let $\mathbf{x}$ and $\mathbf{y}$ be two vectors in $\mathds{R}^{d}$. We define an new operator, called ef-operator, as the vector product of $\mathbf{x}$ and $\mathbf{y}$ as follows;

	\begin{equation}
		\label{eq:operator_definition1}
		\mathbf{x}\diamond \mathbf{y} := \sum_{i=1}^d \text{sign}(x_i \times y_i ) (|x_i|+ |y_i|),
	\end{equation}

	which can also be represented as follows;

	\begin{equation}
		\label{eq:operator_definition}
        \mathbf{x}\diamond \mathbf{y} := \sum_{i=1}^d \text{sign}(x_i)y_i + \text{sign}(y_i)x_i,
	\end{equation}
  
	where $\mathbf{x}=[x_1,\,\hdots,\,x_d]^T,\mathbf{y}=[y_1,\,\hdots,\,y_d]^T\in\mathds{R}^d$. The new vector product operation does not require any multiplications. The operation $(x_i \times y_i ) (|x_i|+ |y_i|)$ uses the sign of the ordinary multiplication but it computes the sum of absolute values of $x_i$ and $y_i$. ef-operator, $\diamond$, can be implemented without any multiplications. It requires summation, unary minus operation and if statements which are all energy efficient operations.

	Ordinary inner product of two vectors induces the $\ell_2$ norm. Similarly, the new vector product induces a scaled version of the  $\ell_1$ norm:

	\begin{equation}
		\label{eq:l1h}
		\mathbf{x}\diamond \mathbf{x} = \sum_{i=1}^d \text{sign}(x_i \times x_i ) (|x_i|+ |x_i|) = 2|| x ||_1
	\end{equation}
	Therefore, the ef-operator performs a new vector product, called $\ell_1$ product of two vectors, defined in Eq.~\ref{eq:operator_definition1}.

	We use following notation for a compact representation of ef-operation of a vector by a matrix. Let $\mathbf{x}\in \mathds{R}^{d}$ and $\mathbf{W}\in \mathds{R}^{d \times M}$ be two matrices, then the ef-operation between $\textbf{W}$ and $x$ is defined as follows;

	\begin{equation}
		\label{eq:operator_definition_matrix}
		\mathbf{x}\diamond \mathbf{W} := \begin{bmatrix} \mathbf{x} \diamond \mathbf{w}_1 & \hdots & \mathbf{x} \diamond \mathbf{w}_M \end{bmatrix}^T \in \mathds{R}^M,
	\end{equation}

	where $\mathbf{w}_j$ is $j$th column of $\mathbf{W}$ for $j=1,\,2,\,\hdots,\,M$.

\section{Additive Neural Network with ef-operator}
\label{sec:ANNwithef}
	We propose a modification to the representation of a neuron in a classical neural network, by replacing the vector product of the input and weight with the $l1$ product defined in ef-operation. This modification can be applied to a wide range of artificial neural networks, including multi-layer perceptrons (MLP), recurrent neural networks (RNN) and convolutional neural networks (CNN).  

	A neuron in a classical neural network is represented by the following activation function;

	\begin{equation}
		\label{eq:classic_layer}
		f(\mathbf{x}\mathbf{W} + \mathbf{b}),
	\end{equation}
    
	where $\mathbf{W}\in\mathds{R}^{d\times M}$, $b_\in\mathds{R}^M$ are weights and biases, respectively, and $\mathbf{x}\in\mathds{R}^d$ is the input vector. 

	A neuron in the proposed additive neural network is represented by the activation function, where we modify the affine transform by using the ef-operator, as follows;
    
	\begin{equation}
		\label{eq:proposed_layer}
		f(\mathbf{a}\odot (\mathbf{x}\diamond \mathbf{W}) + \mathbf{b}),
	\end{equation}

	where $\odot$ is element-wise multiplication operator, $\mathbf{W}\in\mathds{R}^{d\times M}$, $a, b\in\mathds{R}^M$ are weights, scaling coefficients and biases, respectively, and $\mathbf{x}\in\mathds{R}^d$ is the input vector. The neural network, where each neuron is represented by the activation function defined in Eq.~\ref{eq:proposed_layer}, is called additive neural network.

	Comparison of Eq.~\ref{eq:classic_layer} and Eq.~\ref{eq:proposed_layer} shows that the proposed additive neural networks are obtained by simply replacing the affine scoring function ($\textbf{xW+b}$) of a classical neural network by the scoring function function defined over the ef-operator, $(\mathbf{a}\odot (\mathbf{x}\diamond \mathbf{W}) + \mathbf{b})$. Therefore, most of the neural networks can easily be  converted into the additive network by just representing the neurons with the activation functions defined over ef-operator, without modification of the topology and the general structure of the optimization algorithms of the network. 

\subsection{Training the Additive Neural Network}

	Standard back-propagation algorithm is applicable to the proposed additive neural network with small approximations. Back-propagation algorithm computes derivatives with respect to current values of parameters of a differentiable function to update its parameters. Derivatives are computed iteratively using previously computed derivatives from upper layers due to chain rule. Activation function, $f$, can be excluded during these computations for simplicity as its derivation depends on the specific activation function and choice of activation function does not affect the remaining computations. Hence, the only difference in the additive neural network training is the computation of the derivatives of the argument, $(\mathbf{a}\odot (\mathbf{x}\diamond \mathbf{W}) + \mathbf{b})$, of the activation function with respect to the parameters, $\mathbf{W},\mathbf{a},\mathbf{b}$, and input, $\mathbf{x}$, as given below:


	\begin{equation}
		\label{eq:dhda}
		\frac{\partial (\mathbf{a}\odot (\mathbf{x}\diamond \mathbf{W}) + \mathbf{b})}{\partial \mathbf{a}} = Diag(\mathbf{x}\diamond \mathbf{W}),
	\end{equation}

	\begin{equation}
		\label{eq:dhdb}
		\frac{(\partial \mathbf{a}\odot (\mathbf{x}\diamond \mathbf{W}) + \mathbf{b})}{\partial \mathbf{b}} = I_M,
	\end{equation}

	\begin{equation}
		\label{eq:dhdx}
		\begin{aligned}
			\frac{\partial (\mathbf{a}\odot (\mathbf{x}\diamond \mathbf{W}) + \mathbf{b})}{\partial \mathbf{x}_i} =& \begin{bmatrix}
					\mathbf{a}_1 (sign(\mathbf{W}_{i,1}) + 2\mathbf{W}_{i,1}\delta(\mathbf{x}_i))\\
					\vdots\\
					\mathbf{a}_M (sign(\mathbf{W}_{i,M}) + 2\mathbf{W}_{i,M}\delta(\mathbf{x}_i))
				\end{bmatrix}\\
			\approx& \, \mathbf{a} \odot \mathbf{sign}(\mathbf{w}_i),
		\end{aligned}
	\end{equation}

	\begin{equation}
		\label{eq:dhdW}
		\begin{aligned}
			\frac{\partial (\mathbf{a}\odot (\mathbf{x}\diamond \mathbf{W}) + \mathbf{b})}{\partial \mathbf{W}_{i,j}} =& (a_j(sign(\mathbf{x}_i)+2\mathbf{x}_i\delta(\mathbf{W}_{i,j})))\mathbf{e}_j \\
			\approx& \, a_j\mathbf{x}_i\mathbf{e}_j, 
		\end{aligned}
	\end{equation}

	where $\mathbf{a},\mathbf{b}\in \mathds{R}^M$, and $\mathbf{W}\in\mathds{R}^{d\times M}$ are the parameters of the hidden layer, $\mathbf{x}\in\mathds{R}^d$ is the input of the hidden layer, $\mathbf{e}_i\in\mathds{R}^M$ is the $i$th element of standard basis of $\mathds{R}^M$, $\mathbf{w}_i$ is the $i$th column of $\mathbf{W}$, $\mathbf{sign}(\mathbf{w}_i) = \sum_{j=1}^Msign(\mathbf{W}_{i,j})\mathbf{e}_j$ for $i=1,\,\hdots,\,M$, $\delta$ is the dirac delta function. 

	The above derivatives can be easily calculated using the following equation suggested by \cite{bracewellronaldnewbold2000}:

	\begin{equation}
		\frac{d}{dx} sign(x) = 2\delta(x) .
	\end{equation}
    
Approximations to derive the above equation are based on the fact that $\delta(x)=0$, almost surely.

\subsection{Existence and Convergence of the Solution in Additive Neural Network}

	In this section, first, we show that the proposed additive neural network satisfies the universal approximation property of \cite{cybenko1989approximation}, over the space of Lebesgue integrable functions. In other words. there exists solutions computed by the proposed additive network, which is equivalent to the solutions obtained by activation function with classical vector product. Then, we make a brief analysis for the convergence properties of the back propagation algorithm when the vector product is replaced by the ef-operators.
    
\subsubsection{Universal Approximation Property}

	The universal approximation property of the suggested additive neural network is to be proved for each specific form of the activation function. In the following proposition, we suffice to provide the proofs of universal approximation theorem for linear and ReLU activation functions, only. The proof (if it exits) for a general activation function requires a substantial amount of effort, thus it is left to a future work.

	\begin{proposition}
		\label{prop:approximation}
		The additive neural network, defined by the neural activation function with identity
        
		\begin{equation}
			\label{eq:12}
			f(\mathbf{a}\odot (\mathbf{x}\diamond \mathbf{W}) + \mathbf{b})=\mathbf{a}\odot (\mathbf{x}\diamond \mathbf{W}) + \mathbf{b},
		\end{equation}
        
		or an activation function with Rectified Linear Unit,
        
		\begin{equation}
			f(\mathbf{a}\odot (\mathbf{x}\diamond \mathbf{W}) + \mathbf{b})= ReLU(\mathbf{a}\odot (\mathbf{x}\diamond \mathbf{W}) + \mathbf{b}),
		\end{equation}

		is dense in $L^1(I_n)$. 
	\end{proposition}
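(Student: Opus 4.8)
The plan is to follow the functional-analytic strategy of \cite{cybenko1989approximation}. For either activation choice the set $S$ of functions computed by a single-hidden-layer additive network with a linear output combination is a linear subspace of $C(I_n)\subset L^1(I_n)$, so density of $S$ is equivalent, via Hahn--Banach and the representation of $(L^1)^*=L^\infty$ (equivalently finite signed measures on the compact set $I_n$), to the following statement: the only finite signed measure $\mu$ satisfying $\int_{I_n} f(\mathbf{a}\odot(\mathbf{x}\diamond\mathbf{W})+\mathbf{b})\,d\mu(\mathbf{x})=0$ for every admissible parameter choice is $\mu=0$. Thus I would reduce the proposition to proving that the family of ef-preactivations is \emph{discriminatory} in Cybenko's sense.

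The key calculation is to expand the scalar preactivation of a single unit using the second form of the ef-operator in Eq.~\ref{eq:operator_definition}: $a(\mathbf{x}\diamond\mathbf{w})+b=a\sum_i \mathrm{sign}(w_i)x_i+a\sum_i w_i\,\mathrm{sign}(x_i)+b$. This splits the preactivation into a genuinely linear ridge term with normal $\mathbf{s}_w=\mathrm{sign}(\mathbf{w})\in\{-1,0,1\}^d$ and a coordinatewise-sign term $\sum_i w_i\,\mathrm{sign}(x_i)$ that is constant on each orthant of $I_n$. I would exploit this by first fixing an orthant $O$: on the interior of $O$ the preactivation is affine, $a\,\mathbf{s}_w^{T}\mathbf{x}+c_{O}$, so $\mathrm{ReLU}$ of it is a hinge with normal $a\mathbf{s}_w$. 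Varying $a$ and $b$ and invoking the standard fact that $\mathrm{ReLU}$ (being non-polynomial) lets such preactivations approximate indicators of half-spaces, I would show that annihilation of $\mu$ by all these units forces each restriction $\mu|_O$ to annihilate every half-space whose normal is a sign vector.

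The remaining step, which I expect to be the main obstacle, is to upgrade ``$\mu$ annihilates all sign-normal half-spaces on each orthant'' to $\mu=0$. For a classical network every direction $\mathbf{y}\in\mathds{R}^d$ is available and Cybenko closes the argument in one stroke, whereas here the realizable ridge directions are restricted to the finite set of sign vectors, so the usual Fourier/half-space argument does not apply directly. My plan to overcome this is to bring the coordinatewise-sign term back in: since the coefficients $w_i$ multiplying $\mathrm{sign}(x_i)$ are unconstrained reals, letting $a\to 0$ with $a w_i=u_i$ held fixed yields, in the limit, preactivations $\sum_i u_i\,\mathrm{sign}(x_i)+b$ for arbitrary $\mathbf{u}\in\mathds{R}^d$; passing these through $\mathrm{ReLU}$ and taking differences isolates the indicator of each orthant and hence pins down the mass of $\mu$ on every orthant, and combining this orthant-localization with the within-orthant sign-normal hinges is what I would use to force $\mu=0$. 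For the identity activation the same annihilation scheme runs with $f=\mathrm{id}$, where the ef-operator itself supplies the discriminating nonlinearity through the $\mathrm{sign}(x_i)$ features, so that case reduces to showing those sign features together with the coordinates separate $\mu$. Verifying that the sign-vector directions and the orthant indicators are \emph{jointly} rich enough to exhaust the $L^\infty$ test functions, rather than only a proper closed subspace, is the delicate heart of the argument and the place where I would concentrate the rigorous work.
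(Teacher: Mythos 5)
Your proposal diverges from the paper's proof and, more importantly, contains a gap that is fatal to the approach as you have framed it. You restrict attention to a \emph{single-hidden-layer} additive network and try to rerun Cybenko's Hahn--Banach/annihilating-measure argument directly on the ef-preactivations. But on $I_n=[0,1]^n$, which lies in the closure of a single orthant so that $\mathrm{sign}(x_i)\equiv 1$ almost everywhere, your own decomposition $a(\mathbf{x}\diamond\mathbf{w})+b=a\,\mathrm{sign}(\mathbf{w})^T\mathbf{x}+\bigl(a\sum_i w_i+b\bigr)$ shows that every hidden unit is a ridge function whose direction is confined to the finite set $\{-1,0,1\}^d$. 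Finite sums of ridge functions with directions drawn from a fixed finite set form a \emph{proper closed} subspace of $L^1(I_n)$ for $n\ge 2$, so no refinement of the discriminatory-measure argument can establish density at one hidden layer: the statement you are trying to prove at that level is false. Both of your proposed rescues also evaporate on the unit cube: only one orthant has positive measure, so ``orthant indicators'' carry no information, and the limit $a\to 0$ with $aw_i=u_i$ fixed produces the constant $\sum_i u_i+b$. The ``delicate heart'' you flag is not a delicacy to be worked out; it is an impossibility.

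The idea you are missing is the one the paper actually uses: \emph{depth together with the scaling vector} $\mathbf{a}$. Lemma \ref{lemma:sign} constructs a four-hidden-layer network computing $\mathrm{sign}(\mathbf{y}^T\mathbf{x}+b)$ \emph{exactly} for arbitrary real $\mathbf{y}$: the first layer uses $\mathbf{a}_1=[y_1,y_1,y_1,y_2,y_2,y_2]^T$ to multiply the per-coordinate outputs $x_i+\mathrm{sign}(x_i)$ and $x_i+2\,\mathrm{sign}(x_i)$ by the arbitrary reals $y_i$, and the second layer's weights $[1,1,-2,1,1,-2]^T$ cancel the spurious $\mathrm{sign}(x_i)$ artifacts via the identity $\mathrm{sign}(a(u+b\,\mathrm{sign}(u)))=\mathrm{sign}(au)$, recovering the genuine affine functional $\mathbf{y}^T\mathbf{x}+b$; two further layers extract its sign. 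Lemma \ref{lemma:relu_equiv} then converts the identity-activation construction to ReLU, and Cybenko's theorem is invoked as a black box with $\sigma=\mathrm{sign}$, since superpositions $\sum_i\alpha_i\,\mathrm{sign}(\mathbf{y}_i^T\mathbf{x}+\theta_i)$ are dense in $L^1(I_n)$. The paper therefore never needs to prove that ef-preactivations are discriminatory; it sidesteps the restricted-direction problem entirely by exact multi-layer realization of arbitrary ridge directions. To salvage your route you would at minimum have to admit depth so as to recover arbitrary directions, at which point you are reproducing the paper's construction.
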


	In order to prove the above proposition, the following two lemmas are proved first:

	\begin{lemma}
		\label{lemma:sign}
		If activation function $f$ is taken as identity (as in Eq.~\ref{eq:12}), then there exist additive neural networks, defined over the ef-operator, which can compute $f(x) = sign(\mathbf{y}^T\mathbf{x}+b)$, for any $\mathbf{y}\in\mathds{R}^d$ and $b\in\mathds{R}$.  
	\end{lemma}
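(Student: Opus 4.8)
The plan is to exploit the fact that on the domain $I_n=[0,1]^n$ every input coordinate satisfies $x_i\ge 0$, so that $\text{sign}(x_i)=1$ almost everywhere. Under this restriction the first summand $\sum_i \text{sign}(x_i)w_i$ of the ef-product of Eq.~\ref{eq:operator_definition} becomes a constant, while the second summand $\sum_i \text{sign}(w_i)x_i$ becomes genuinely linear in $\mathbf{x}$. I would therefore build the target in two stages: first realize the affine functional $\mathbf{x}\mapsto \mathbf{y}^T\mathbf{x}+b$ exactly (a.e.) with identity-activation ef-neurons, and then peel off its sign with a second group of ef-neurons.

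For the affine stage, I would use a first layer of $d$ coordinate neurons, the $k$-th having weight column $\mathbf{w}_k=\text{sign}(y_k)\mathbf{e}_k$ and scaling $a_k=|y_k|$. A direct computation using $x_k\ge 0$ gives $\mathbf{x}\diamond\mathbf{w}_k=\text{sign}(y_k)(1+x_k)$, so that after scaling the $k$-th output is $h_k=y_k+y_kx_k$. A second layer consisting of a single neuron with the all-ones weight $\mathbf{1}$ then returns $\mathbf{h}\diamond\mathbf{1}=\sum_k\text{sign}(h_k)+\sum_k h_k=\text{const}+\mathbf{y}^T\mathbf{x}$, because $\text{sign}(h_k)=\text{sign}(y_k)$ is itself constant on $I_n$. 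Folding the constant into the bias produces exactly $z:=\mathbf{y}^T\mathbf{x}+b$.

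To extract the sign, I would note that for a positive scalar weight $c$ the one-dimensional ef-product obeys $z\diamond c=c\,\text{sign}(z)+z$, which still carries the unwanted linear term $z$. I would cancel it by forming two such outputs $u_1=c\,\text{sign}(z)+z$ and $u_2=c'\,\text{sign}(z)+z$ with $c\ne c'$, and then applying a final neuron with weight $(1,-1)$: since $\text{sign}(u_1)=\text{sign}(u_2)=\text{sign}(z)$ whenever $c,c'>\sup_{I_n}|z|$, the ef-operator's sign terms cancel and $(u_1,u_2)\diamond(1,-1)=(c-c')\,\text{sign}(z)$. Rescaling by $(c-c')^{-1}$ then yields $\text{sign}(\mathbf{y}^T\mathbf{x}+b)$ exactly, off a set of measure zero.

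The main obstacle, and the place where care is needed, is controlling the ef-operator's own $\text{sign}$ terms, which are the sole source of nonlinearity and which I must repeatedly verify collapse to constants on the relevant regions. In particular, the cancellation in the last step hinges on the bound $c,c'>\sup_{I_n}|z|$, which is available precisely because $I_n$ is compact and $z$ is affine, hence bounded; and the whole identity holds only off the measure-zero set where some $x_k=0$ or $z=0$. Since this lemma is a stepping stone toward density in $L^1(I_n)$, almost-everywhere agreement is exactly what is required, and no continuity at the jump needs to be recovered.
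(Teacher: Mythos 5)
Your construction is correct (up to the measure-zero sets you acknowledge) and its second half is essentially the paper's own gadget: the paper also forms $u+c\,\mathrm{sign}(u)$ for two different positive constants ($c=2$ and $c'=1$ in its layers 3 and 4) and then takes the ef-product with the weight vector $(1,-1)^T$ so that the two $\mathrm{sign}$ contributions cancel and the linear parts subtract to leave $(c-c')\,\mathrm{sign}(z)$. Where you genuinely diverge is the affine stage. The paper does \emph{not} restrict $\mathbf{x}$ to the positive orthant; instead it neutralizes the unwanted $\mathrm{sign}(x_i)$ terms by a redundancy trick: layer 1 produces the triple $y_i(x_i+\mathrm{sign}(x_i))$, $y_i(x_i+\mathrm{sign}(x_i))$, $y_i(x_i+2\,\mathrm{sign}(x_i))$, and layer 2 recombines these with weights $(1,1,-2)$ so that both the $\mathrm{sign}$-of-input terms generated by the ef-operator and the spurious $\mathrm{sign}(x_i)$ offsets cancel identically, yielding $\mathbf{y}^T\mathbf{x}+b$ exactly for every $\mathbf{x}\in\mathds{R}^d$ (the key identity being $\mathrm{sign}(a(u+c\,\mathrm{sign}(u)))=\mathrm{sign}(au)$ for $c>0$). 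You instead absorb those terms into the bias by observing that $\mathrm{sign}(x_i)\equiv 1$ a.e.\ on $I_n$, which is simpler and uses fewer neurons per input coordinate, but proves a strictly weaker statement than the lemma as written: your network computes $\mathrm{sign}(\mathbf{y}^T\mathbf{x}+b)$ only on the nonnegative orthant, and the "constant" $\sum_k \mathrm{sign}(y_k)+\sum_k y_k$ you fold into the bias would no longer be constant off $I_n$. Since the lemma is only invoked to establish density in $L^1(I_n)$, this weakening is harmless for the downstream proposition, but it does tie the lemma to the unit cube in a way the paper's domain-free construction does not; also, your requirement $c,c'>\sup_{I_n}|z|$ is unnecessary — $\mathrm{sign}(c\,\mathrm{sign}(z)+z)=\mathrm{sign}(z)$ holds for every $c>0$ regardless of the magnitude of $z$.
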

	
    \begin{proof}
		Constructing an additive neural network, defined over ef-operator, is enough to prove the lemma. We can construct explicitly a sample network for any given $\mathbf{y}\in\mathds{R}^d$ and $b\in\mathds{R}$.  One such network consists of four hidden layers for $d=2$, this network can easily extended into higher dimensions. Let $\mathbf{x}$ be $[x_0, x_1]^T$ and $\mathbf{y}$ be $[y_0, y_1]^T$, then four hidden layers with following parameters can compute $f(x)=sign(\mathbf{y}^T\mathbf{x}+b)$.

        \begin{itemize}
            \item Hidden layer 1,

                $\mathbf{a}_1 = [y_1,\,y_1,\,y_1,\,y_2,\,y_2,\,y_2]^T$,

                $\mathbf{b}_1 = [0,\,0,\,0,\,0,\,0,\,0]^T$,

                $\mathbf{W}_1 = \begin{bmatrix}
                    1 & 1 & 2 & 0 & 0 & 0\\
                    0 & 0 & 0 & 1 & 1 & 2
                \end{bmatrix}$.

            \item Hidden layer 2,

                $\mathbf{a}_2 = [1]$,

                $\mathbf{b}_2 = [b]$,

                $\mathbf{W}_2 = 
                    \begin{bmatrix}
                        1 & 1 & -2 & 1 & 1 & -2
                    \end{bmatrix}^T$.
            \item Hidden layer 3,

                $\mathbf{a}_3 = \begin{bmatrix}1 & 1\end{bmatrix}^T$,

                $\mathbf{b}_3 = \begin{bmatrix}0 & 0\end{bmatrix}^T$,

                $\mathbf{W}_3 = 
                    \begin{bmatrix}
                        2 & 1
                    \end{bmatrix}$.
            \item Hidden layer 4,

                $\mathbf{a}_4 = [1]$,

                $\mathbf{b}_4 = [0]$,

                $\mathbf{W}_4 = 
                    \begin{bmatrix}
                        1 \\ -1
                    \end{bmatrix}^T$.
        \end{itemize}

        The function computed by this network can be simplified using the fact that, $\forall a,u\in \mathds{R}$ and $\forall b\in \mathds{R}^+$, 

        \begin{equation}
            \label{eq:sign_fact}
            sign(a(u+bsign(u)))=sign(au).
        \end{equation}

        Then, the hidden layers $h_1, h_2, h_3$ and $h_4$ can be represented as follows;

        \begin{equation}
            \label{eq:network_output}
            \begin{aligned}
                \mathbf{h}_1=\mathbf{a}_1 \odot (\mathbf{x} \diamond \mathbf{W}_1) + \mathbf{b}_1 &=
                \begin{bmatrix}
                    y_1(x_1 + sign(x_1))\\
                    y_1(x_1 + sign(x_1))\\
                    y_1(x_1 + 2sign(x_1))\\
                    y_2(x_2 + sign(x_2))\\
                    y_2(x_2 + sign(x_2))\\
                    y_2(x_2 + 2sign(x_2))
                \end{bmatrix}\\
                \mathbf{h}_2=\mathbf{a_2}\odot(\mathbf{h}_1\diamond \mathbf{W}_2)+\mathbf{b}_2 &= \mathbf{y}^T\mathbf{x}+b\\
                \mathbf{h}_3=\mathbf{a_3}\odot(\mathbf{h}_2\diamond \mathbf{W}_3)+\mathbf{b}_3 &= \begin{bmatrix} \mathbf{h}_2+2sign(\mathbf{h}_2)\\\mathbf{h}_2+sign(\mathbf{h}_2)\end{bmatrix}\\  
                \mathbf{h}_4=\mathbf{a_3}\odot(\mathbf{h}_3\diamond \mathbf{W}_4)+\mathbf{b}_4 &= sign(\mathbf{y}^T\mathbf{x}+b)
            \end{aligned}
        \end{equation}
    \end{proof}

	\begin{lemma}
		\label{lemma:relu_equiv}
		If the function $g(x)$ can be computable with activation function 

		\begin{equation}
			f(\mathbf{a}\odot (\mathbf{x}\diamond \mathbf{W}) + \mathbf{b})=\mathbf{a}\odot (\mathbf{x}\diamond \mathbf{W}) + \mathbf{b},
		\end{equation}

		then there exist an additive neural network architectures with a Rectified Linear Unit activation function,  

		\begin{equation}
			f(\mathbf{a}\odot (\mathbf{x}\diamond \mathbf{W}) + \mathbf{b})= ReLU(\mathbf{a}\odot (\mathbf{x}\diamond \mathbf{W}) + \mathbf{b}),
		\end{equation} 

		which can also compute $g(x)$. 
	\end{lemma}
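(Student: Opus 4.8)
The plan is to exploit the elementary identity $t=\mathrm{ReLU}(t)-\mathrm{ReLU}(-t)$ to simulate every identity-activated layer of the given network by a ReLU-activated layer, while carrying each real signal in a redundant positive/negative-part form. Concretely, I would represent a scalar value $t$ produced at some layer by the nonnegative pair $(t^{+},t^{-})=(\mathrm{ReLU}(t),\mathrm{ReLU}(-t))$, noting that $t^{+}t^{-}=0$ and $t=t^{+}-t^{-}$. Since a ReLU neuron naturally emits $\mathrm{ReLU}(\cdot)$, this pair is exactly what two ReLU neurons can produce: passing the same pre-activation through scaling and bias $(\mathbf{a},\mathbf{b})$ yields $t^{+}$, and through the negated $(-\mathbf{a},-\mathbf{b})$ yields $t^{-}$. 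Thus each identity layer is replaced by a ReLU layer of double width whose two halves output the positive and negative parts of the original pre-activation.

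The key point that makes the simulation \emph{exact} is that the ef-operation distributes additively over this split. I would first verify the scalar identity
\[
	t\diamond w \;=\; \bigl(t^{+}\diamond w\bigr)+\bigl(t^{-}\diamond(-w)\bigr),
\]
which follows by checking the three cases $t>0$, $t<0$, $t=0$ directly from Eq.~\ref{eq:operator_definition} together with $t^{+}t^{-}=0$. Because the ef-operation of Eq.~\ref{eq:operator_definition} is a \emph{sum} over input coordinates, this identity means that if the next layer holds $t^{+}$ and $t^{-}$ on two separate input coordinates and assigns them the opposite weights $w$ and $-w$ (a duplication of the corresponding row of the weight matrix with a sign flip), then the ef-sum reconstructs $t\diamond w$ with \emph{no} error. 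Applying this coordinatewise, the matrix ef-operation $\mathbf{h}_{\ell-1}\diamond\mathbf{W}_{\ell}$ of the original network is reproduced exactly by an ef-operation of the split vector $(\mathbf{h}_{\ell-1}^{+},\mathbf{h}_{\ell-1}^{-})$ against the weight matrix built from the rows of $\mathbf{W}_{\ell}$ for the positive parts and the negated rows of $\mathbf{W}_{\ell}$ for the negative parts. Hence the spurious $\mathrm{sign}$ term that the ef-operator would add to a naive difference is cancelled precisely because both parts enter with opposite-signed weights.

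With these two devices, I would build the ReLU network layer by layer: the first layer computes $\mathbf{x}\diamond\mathbf{W}_{1}$ directly and splits its output; every subsequent layer receives a split input, uses the paired weight matrix to recover the exact interior value $\mathbf{h}_{\ell-1}\diamond\mathbf{W}_{\ell}$, applies the original $(\mathbf{a}_{\ell},\mathbf{b}_{\ell})$, and again splits through the duplicated and negated columns. By induction on depth, the two halves of layer $\ell$ carry exactly $\mathrm{ReLU}(\mathbf{h}_{\ell})$ and $\mathrm{ReLU}(-\mathbf{h}_{\ell})$, where $\mathbf{h}_{\ell}$ is the value of the $\ell$-th identity layer of the given network. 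The target value $g(x)=\mathbf{h}_{L}$ is then recovered as the difference $\mathbf{h}_{L}^{+}-\mathbf{h}_{L}^{-}$ of the two terminal branches.

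I expect the only genuine obstacle to be this final recombination. Unlike the interior operations, the terminal difference $\mathbf{h}_{L}^{+}-\mathbf{h}_{L}^{-}$ is not itself an ef-operation followed by ReLU: an ef-readout with weight $1$ would return $\mathbf{h}_{L}+\mathrm{sign}(\mathbf{h}_{L})$ rather than $\mathbf{h}_{L}$, by the same sign-term effect used in Eq.~\ref{eq:sign_fact} and Lemma~\ref{lemma:sign}. I would handle this by reading the output through the standard linear output map $\mathbf{h}_{L}^{+}-\mathbf{h}_{L}^{-}$, which is exactly what the positive/negative-part construction is designed to supply; in the special case where $g$ is nonnegative the negative branch is simply discarded and $g=\mathbf{h}_{L}^{+}$ directly. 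Everything else reduces to bookkeeping of layer widths and signs, so the heart of the write-up is the exact-reconstruction identity above and the inductive propagation of the split representation.
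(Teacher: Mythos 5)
Your proposal is correct and follows essentially the same route as the paper: your exact-reconstruction identity $t\diamond w=(t^{+}\diamond w)+(t^{-}\diamond(-w))$ is precisely the paper's Observation~3, and your layer-doubling into positive/negative parts is the paper's Observation~1 extension, with the final linear readout matching the paper's fully connected output layer. You simply make explicit (via the three-case check and the terminal recombination discussion) what the paper leaves as unproved observations.
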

    
	\begin{proof}
		This lemma can be proven using the following simple observations,
		
        \begin{itemize}
			\item Observation 1: If 
    			
                \begin{equation}
    				g(\mathbf{x}) = \mathbf{a}\odot(\mathbf{x}\diamond\mathbf{w})+\mathbf{b},
    			\end{equation} 
    			
                then, 
    
    			\begin{equation}
    				-g(\mathbf{x})= \mathbf{a}'\odot(\mathbf{x}\diamond\mathbf{w}')+\mathbf{b}',
    			\end{equation}
                
    			where $\mathbf{a}'=\mathbf{a}$, $\mathbf{w}'=-\mathbf{w}$, and $\mathbf{b}'=-\mathbf{b}$.
            
            \item  Observation 2: If 
    			
                \begin{equation}
    				g(\mathbf{x}) = \mathbf{a}\odot(\mathbf{x}\diamond\mathbf{w})+\mathbf{b},
    			\end{equation}
                
    			then,
    
    			\begin{equation}
    				g(\mathbf{x}) = \mathbf{a}''\odot((-\mathbf{x})\diamond\mathbf{w}'')+\mathbf{b}'',
    			\end{equation}
                
    			where $\mathbf{a}'=\mathbf{a}$, $\mathbf{w}'=-\mathbf{w}$, and $\mathbf{b}'=\mathbf{b}$.
    
    		\item Observation 3: If 
    
    			\begin{equation}
    				g(\mathbf{x}) = \mathbf{a}\odot(\mathbf{x}\diamond\mathbf{w})+\mathbf{b},
    			\end{equation}
    			
                then,
    			
                \begin{equation}
    				g(\mathbf{x}) = \mathbf{a}'''\odot(ReLU(\mathbf{x})\diamond\mathbf{w}+ReLU(-\mathbf{x})\diamond\mathbf{w}''')+\mathbf{b}''',
    			\end{equation}
                
    			where $\mathbf{a}'''=\mathbf{a}$, $\mathbf{w}'''=-\mathbf{w}$, and $\mathbf{b}'''=\mathbf{b}$.
		\end{itemize}	

		Lets assume that there exists an additive neural network, defined over the ef-operator, using identity as activation function which can compute the function $g(x)$. We can extend each layer using Observation 1, to compute both $g(x)$ and $-g(x)$. Afterwards, we can replace zeros on the weights introduced during previous extension on each layer using Observation 3, to replace the activation function with ReLU. This works, because either $ReLU(x)$ or $ReLU(-x)$ is 0. The modified network is an additive neural network with ReLU activation function, which can compute the function $g(x)$.
	\end{proof}

\begin{figure}
	\centering
	\includegraphics[width=0.50\textwidth]{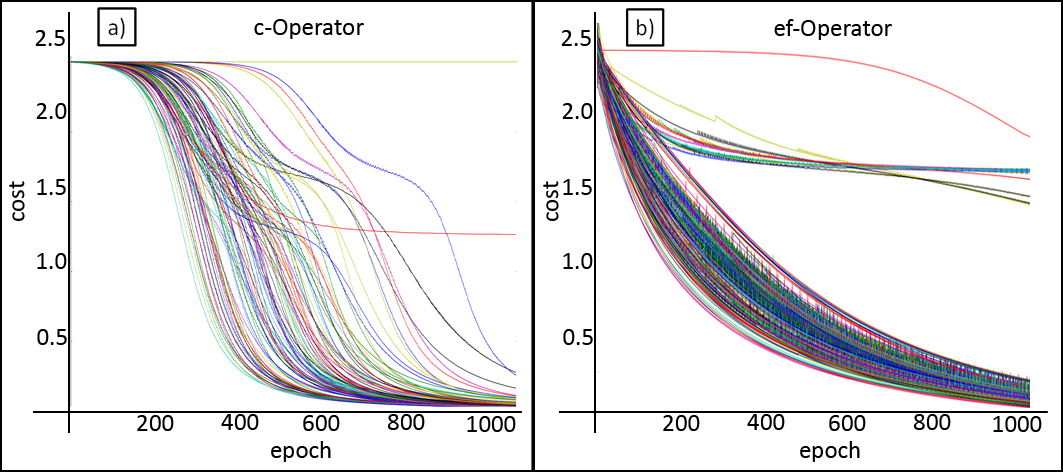}

	\caption{The plots of loss changes in the stochastic gradient descent (SGD) algorithm in the training phase of XOR problem while using single hidden layer MLP. While the Figure (1.a) shows the the changes of loss in the network by using classical score function \textbf{(c-operator)}, Figure (1.b) shows the loss changes in the same network with our proposed \textbf{(ef-operator)}. The results have been obtained by training the network 200 times in 1000 epochs which are shown by different colors.}
	\label{costFig} 
\end{figure}

\begin{proof}[Proof of Proposition \ref{prop:approximation}]
	This can be shown by the universal approximation theorem for bounded measurable sigmoidal functions \cite{cybenko1989approximation}. This theorem states that finite sums of the form

	\begin{equation}
		\label{eq:superposition_of_signs}
			G(\mathbf{x}; \{\alpha_i\}_{i=1}^N,\{\mathbf{y}_i\}_{i=1}^N,\{\theta_i\}_{i=1}^N) = \sum_{i=1}^N\alpha_i \sigma(\mathbf{y}_i^T\mathbf{x}+\theta_i),
	\end{equation}
	
	are dense in $L^1(I_n)$, where $\alpha_i, \theta_i \in \mathds{R}$ and $\mathbf{x},\mathbf{y}_i\in\mathds{R}^d$ for $i=1,\,2,\,\hdots,\,N$. It can be easily shown that $sign$ function is a bounded sigmoidal function. \textbf{Lemma} \ref{lemma:sign} shows that, if the activation function is taken as identity, then there exist networks which compute $sign(\mathbf{y}_i^T \mathbf{x} + \theta_i)$ for $i=1,\,2\,\hdots,\,N$. \textbf{Lemma} \ref{lemma:relu_equiv} shows that there are equivalent networks using ReLU as the activation function which compute the same functions. These networks can be combined with concatenation of layers of the additive neural networks to a single network. Also, proposed architecture contains fully connected linear layer at the output, and this layer can compute superposition of the computed $sign$ functions yielding $G(x)$. Since $G(\mathbf{x})$ can be computable by the additive neural networks, and $G(\mathbf{x})$ functions are dense in $L^1(I_n)$, then functions computed by the additive neural networks are also dense in $L^1(I_n)$.
\end{proof}

\subsubsection{Computational efficiency}

	The proposed additive neural network contains more parameters then the classical neuron representation in MLP architectures. However, each hidden layer can be computed using considerably less number of multiplication operator. A classical neural network, represented by the activation function $f(\mathbf{x}\mathbf{W}+\mathbf{b})$, containing $M$ neurons with $d$ dimensional input, requires $d\times M$ many multiplication operator to compute $\mathbf{x}\mathbf{W}+\mathbf{b}$. On the other hand, the additive neural network, represented by the activation function, $f(\mathbf{a}\odot(\mathbf{x}\diamond\mathbf{W})+\mathbf{b})$ with the same number of neurons and input space requires $M$ many multiplication operator to compute $\mathbf{a}\odot(\mathbf{x}\diamond\mathbf{W})+\mathbf{b}$. This reduction on number of multiplications is especially important when input size is large or hidden layer contains large number of neurons. If activation function is taken as either identity or ReLU, then output of this layer can be computed without any complex operations, and efficiency of the network can be substantially increased. Multiplications can be removed entirely, if scaling coefficients, $\mathbf{a}$ are taken as 1. However, these networks may not represent some functions, and consequently may perform poorly on some datasets.

\subsubsection{Optimization problems}

	Due to the sign operation performed in each neuron, the  ef-operator creates a bunch of hyperoctants in the cost function at each layer of the additive neural network. Therefore, the local minima computed at each layer, depends on the specific hyperoctant for a set of weights. The change in the signs results in a jump from a hyperoctant to another one. 
    
	For some datasets, some of the local minima may lie on the boundaries of the hyperoctants. Since the hyperoctants are open sets, this may leave some hyperoctands with non-existing local minima. A gradient based search algorithm may update the weights such that the algorithm converges to the  local minima on the boundary. If the step size and number of epochs are increased, then  the updated weights leave the current hyperoctant without converging to a local minima on the boundary and new set of weights make the algorithm to converge to a local minima in another hyperoctant. However, the new hyperoctant may have the same problem.


\section{Experimental Results}
 

\begin{table*}[h]
	\centering
	\caption{Optimal classification results of classic function c-operator and our proposed ef-operator score functions in different MLP (in different learning rates) and LeNet-5 architectures.}
	\begin{tabular}{|l|c||c|c||c|c||c|c|}
		\hline
		\textit{\textbf{Architectures}}                    
		\label{resulttable}
		& \textbf{-}   &\multicolumn{2}{c||}{\textbf{ReLU}}  &\multicolumn{2}{c||}{\textbf{Tanh}} &\multicolumn{2}{c|}{\textbf{Sigmoid}}     \\ \hline 

		&\small{learning rate}  &\small{c-operator}  &\small{ef-operator} &\small{c-operator}  &\small{ef-operator}    &\small{c-operator}    &\small{ef-operator}   \\ \hline \hline

		\multirow{4}{*}{MLP (\small{2 Hidden Layers}) }  &0.01     & \textbf{98.43}  & 98.01  &96.39  &95.57  &97.81   &96.80 \\ 
                              &0.005    & 98.36  & \textbf{98.09}  &97.23  &96.05  &98.07   &97.10 \\ 
                              &0.001    & 98.03  & 97.76  &97.63  &96.77  &95.83   &96.47\\ 
                              &0.0005   & 97.61  & 97.21  &96.27  &96.10  &95.83   &95.53 \\ \hline 

		\multirow{4}{*}{MLP (\small{3 Hidden Layers})} &0.01    &96.85   &97.80 &90.42 &92.64 &96.31  &96.23 \\ 
                              &0.005   &98.15   &\textbf{97.95} &95.08 &93.33 &96.48  &96.50 \\ 
                              &0.001   &\textbf{98.22}   &97.63 &97.49 &93.63 &95.74  &95.85\\ 
                              &0.0005  &97.65   &96.97 &96.78 &93.93 &94.34  &94.83 \\ \hline \hline

\multirow{3}{*}{LeNet-5}  &  &    &   &   &   &    &  \\
						  &\small{-} & \textbf{99.29}  & \textbf{98.60}  &99.22  &98.43  &99.20   &97.81  \\
                          &  &    &   &   &   &    &  \\
\hline
	\end{tabular}
\end{table*}
\begin{figure*}
	\centering
	\includegraphics[width=16.6cm,height=4cm,keepaspectratio]{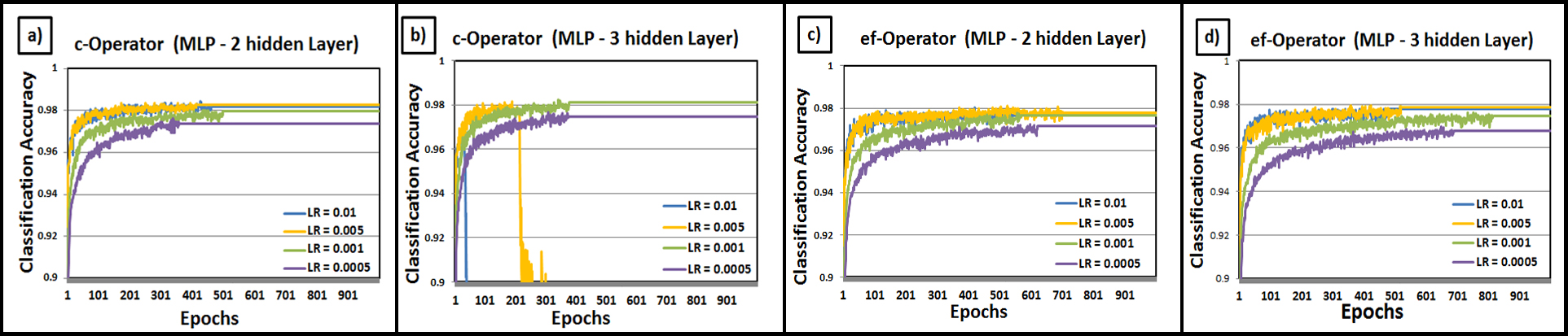}
	\caption{Plots of classification accuracies in different architectures with different score functions. Subplots (a) and (b) shows the results of MLP with 2 and 3 hidden layers using classic c-operator. Subplots (c) and (d) shows the results of MLP with 2 and 3 hidden layers using our proposed ef-operator.}
	\label{ResutlsCurve} 
\end{figure*}


	Multi-layer perceptron (MLP) \cite{Bishop} is used to measure the ability of the proposed additive neural network, in machine learning problems.  MLP consists of a single input and output layer and multiple hidden layers. The size and the number of hidden layers can vary a great deal, depending on the problem domain. In this research, we use one, two and three hidden layers, respectively, in two different classification problems, namely XOR problem and character recognition of MNIST dataset. The input layer receives pattern sample $x \in  {R}^\textbf{D}$ to the network. 

	On the other hand, the hidden layer(s) contains biological inspired units called neurons which learns a new representations from the input patterns. Each neuron consists of a scoring function and an activation function. As discussed in the Section~\ref{sec:ANNwithef}, the scoring function is an affine transform in the form of $(\mathbf{x}\mathbf{W} + \mathbf{b})$ in the classic neural network where $\mathbf{x}$ and $\mathbf{b}$ are the parameters. In this study, we call the widely used classic scoring function  $(\mathbf{x}\mathbf{W} + \mathbf{b})$ as c-operator. As discussed in the Section~\ref{sec:ANEEO} and \ref{sec:ANNwithef}, the proposed score function, ef-oprerator, is an energy efficient alternative of the classical vector product.  

	In addition to the score function, each neuron of a hidden layer also has an activation function that makes the network nonlinear. Several activation functions such as sigmoid, hyperbolic tangent (Tanh) and rectified linear unit (ReLU) functions have been used as the activation function. While some studies such as \cite{krizhevsky2012imagenet} have shown that ReLU outperform the others in most of the cases, we also examined sigmoid and Tanh in the following experiments. Finally, the last layer of MLP, called output layer, maps the final hidden layer to the scores of the classes by using its own score function. We used both the classical c-operator and the new ef-operator at the output layer to make the final decision. 
 
	The aim of MLP is to find the optimal values for parameters $\mathbf{W}$ and $\mathbf{b}$ using backpropagation \cite{BackProp} and optimization algorithms such as stochastic gradient descent (SGD). In order to implement the network, Tensorflow \cite{tensorflow2015}, a python library for numeric computation, is used.

	In the first experiment, we examine the ability of additive neural network to partition a simple nonlinear space, solving the XOR problem. We compare the classical MLP with affine scoring function and additive neural network with ef-operator. Since a single hidden layer MLP with c-operator can solve XOR problem, we used one hidden layer in both classical and the proposed architectures. Mean squared error is used as cost function to measure the amount of loss in training phase of the network, and we fixed the number of neurons in the hidden layer to 10.
    
	The additive neural network with ef-operator could successfully solve the XOR problem and reached to $100\%$ accuracy in this problem. We also investigate the rate of changes inloss changes at each epoch. It is also notable that some of the runs that are shown by colors, do not reach to minimum values in 1000 epochs. This shows that more epochs is needed in some runs. Generally, the number of epochs depends on learning rate and initialization condition, and the final epoch can be determined by some stopping criteria. However, in this study, we are only interested to see the variations in the cost; therefore, we fixed the number of epochs to 1000.

	Left and right sides of Fig.~\ref{costFig} show the change of loss in the MLP using c-operator and ef-operator, respectively, with ReLU as the activation function. We rerun the network for 200 times in 1000 epochs, and used k-fold cross validation to specify the learning-rate parameter of SGD. Each color of the plots shows the variations in loss or cost value (x axis) across the epochs (y axis) in one specific run of the network. As the figure shows, the cost value of the network with our proposed ef-operator decreases along the epochs and acts similar to classical affine operator, called c-operator.        

	In the second experiment, we classified the digits of MNIST dataset of \cite{lecun1998gradient} which consists of handwritten examples to examine our proposed additive neural network in multiclass classification problem. MNIST dataset consists of 30,000 training samples and 5,000 test data. Each example is an image of a digit from 0 to 9. One-hot code is used to encode the class labels. Each example is an image of size $28 \times 28$, and each image is concatenated in a single vector to input the network. Therefore, the size of the input layer of the network is 784. We used cross-entropy based cost function and SGD to train the network. We used 150 number of examples in each iteration of SGD. In other words, the batch size is equal to 150.

	Table~\ref{resulttable} contains the classification accuracies of the MLP architecture using three activation functions: ReLU, Tanh and Sigmoid with four different learning rates. As the table shows, our additive neural network over ef-operator reaches to the performance of classic MLP with c-operator. In other words, with a slightly sacrificing the classification performance we can use the proposed ef-operator which much more energy-efficient. Note that, we have not used any regularization methods such as drop out used by Krizhevsky et al. \cite{krizhevsky2012imagenet}, because we simply aim to show that our proposed ef-operator gives the learning ability to the deep MLP. Also Table.~\ref{resulttable} shows that maximum of the performances have been obtained using ReLU activation function. We are also interested to see the variations in the classification performances during the epochs and along the epochs.



With addition to MLP, we have used the proposed ef-operator to learn the parameters of LeNet-5 \cite{lecun1998gradient} to classifying MNIST dataset. Table~\ref{resulttable} contains the classification accuracy of LeNet-5 architecture that contains two conventional and one fully connected layer.  We trained the network with SGD and cross-entropy based cost functions as we did on MLP case. It should be noted that we have used the conventional c-operator in the output layer of both MLP and LeNet-5 architectures. As shown in the table, the proposed ef-operator catches up the c-operator with a small amount of loss. 

	Figure~\ref{ResutlsCurve} shows the results of the classification accuracies obtained from MLP based on our proposed ef-operator and traditionally used c-operator. The performances (shown in the y axis of the sub figures) obtained in successive epochs (shown in the x axis of the sub figures). In each epoch, the network is trained with all of the training examples. The plots of the sub-figures are obtained using four different learning rates: 0.1, 0.005, 0.001 and 0.0005. Subplots (a) and (b) at the left of figure shows the results of c-operator in MLP with 2 and 3 hidden layers respectively, and subplots (c) and (d) shows the results of our proposed ef-operator. As Figure~\ref{ResutlsCurve} shows, our operator effectively increases the classification performance as the number of epochs increases and reaches nearly to the original linear function.


\section{Conclusion}
	In this study, we propose an energy efficient additive neural network architecture. The core of this architecture is the lasso norm based ef-operator that eliminates the energy-consumption multiplications in the conventional architecture. We have examined the universal approximation property of the proposed architecture over the space of Lebesgue integrable functions and test it in real world problems. We showed that ef-operator can successfully solve the nonlinear XOR problem. Moreover, we have observed that with sacrificing $0.39\%$ and $0.69\%$ accuracy, our proposed network can be used in the multilayer perceptron (MLP) and conventional neural network respectively to classify MNIST dataset. As a future work, we plan to test the proposed architecture in the state-of-the-art deep neural networks.

\section*{Acknowledgment}
A. Enis Cetin's work was funded in part by a grant from Qualcomm.


\end{document}